\documentclass{article}

\usepackage{microtype}
\usepackage{graphicx}
\usepackage{subcaption}
\usepackage{booktabs} 
\usepackage{arydshln} 
\usepackage{xcolor}
\usepackage{enumitem}
\usepackage{mathtools}
\usepackage{multirow}
\usepackage{hyperref}
\usepackage{float}

\DeclarePairedDelimiterX{\infdivx}[2]{(}{)}{%
  #1\;\delimsize\|\;#2%
}
\newcommand{\infdiv}{D\infdivx}

\usepackage[accepted]{icml2026}

\usepackage{amsmath}
\usepackage{amssymb}
\usepackage{mathtools}
\usepackage{amsthm}

\usepackage[capitalize,noabbrev]{cleveref}

\theoremstyle{plain}
\newtheorem{theorem}{Theorem}[section]

\newtheorem{lemma}[theorem]{Lemma}

\theoremstyle{definition}
\newtheorem{definition}[theorem]{Definition}

\theoremstyle{remark}
\newtheorem{remark}[theorem]{Remark}

\usepackage[textsize=tiny]{todonotes}

\icmltitlerunning{Formalizing the Binding Problem}

\begin{document}

\twocolumn[
  \icmltitle{Formalizing the Binding Problem}



  \icmlsetsymbol{equal}{*}

  \begin{icmlauthorlist}
    \icmlauthor{Lianghuan Huang}{equal,penn-physics}
    \icmlauthor{Yihao Li}{equal,penn-cis}
    \icmlauthor{Saeed Salehi}{tuberlin}
    \icmlauthor{Yingshan Chang}{cmu}
    \icmlauthor{Ansh Soni}{penn-psych}
    \icmlauthor{Konrad P. Kording}{penn-neuro}
  \end{icmlauthorlist}

  \icmlaffiliation{penn-physics}{Department of Physics and Astronomy, University of Pennsylvania, Philadelphia, PA, USA}
  \icmlaffiliation{penn-neuro}{Department of Neuroscience, University of Pennsylvania, Philadelphia, PA, USA}
  \icmlaffiliation{penn-cis}{Department of Computer and Information Science, University of Pennsylvania}
  \icmlaffiliation{penn-psych}{Department of Psychology, University of Pennsylvania}
  \icmlaffiliation{tuberlin}{Machine Learning Group, Technical University of Berlin, Berlin, Germany}
  \icmlaffiliation{cmu}{Language Technology Institute, Carnegie Mellon University, Pittsburgh, PA, USA}

  \icmlcorrespondingauthor{Konrad P. Kording}{kording@seas.upenn.edu}

  \icmlkeywords{binding, object centric learning, probing, information theory, vision transformer}

  \vskip 0.3in
]



\printAffiliationsAndNotice{\icmlEqualContribution}

\begin{abstract}
Representations of the world, arguably, contain information about features (e.g. something is blue, something is a circle) but also information about which features are part of the same object (e.g. the circle is blue), which we call binding information. Any system with the ability to understand scenes with multiple objects must be able to solve the binding problem: it needs to know which features belong together. However, despite work showing that Vision Transformers (ViTs) know which patches belong together, it is not known whether current deep learning models learn to exhibit binding information, i.e., for features. We may believe that there is not much binding information, after all misattributing features to wrong objects is a common failure of ViT-based architectures, especially in scenes with objects sharing features. Here we formalize the binding problem with an information-theoretic approach, and introduce a probing method to measure binding information in model representations. We perform experiments on ViTs, measuring binding from different components of the architecture, such as the image summary token \texttt{[CLS]} or the spatial tokens. We use datasets with different binding challenges, such as feature sharing, occlusion, and natural features, while comparing the performance of several pre-trained ViTs. Overall, our research demonstrates binding as a key ingredient to strong visual recognition and reasoning.\footnote{Code available at: \url{https://github.com/KordingLab/formalizing-the-binding-problem}.}
\end{abstract}

\section{Introduction}

Binding is the ability to tell which features belong to which objects in a multi-object scene. Humans naturally bind features to objects. This ability is so effortless that we rarely recognize binding as a computational problem in its own right. In the visual cortex, distinct neuronal populations are tuned to specific features (e.g., color, orientation, motion), but in multi-object scenes such feature-based encoding alone does not specify which features belong together as parts of the same object. Nevertheless, the brain achieves binding with extraordinary precision, and a range of mechanisms have been proposed to account for this capacity \cite{TREISMAN198097, vonderMalsburg1981Correlation, zhang2020neural} (more in Section~\ref{sec:relatedwork}), but it remains largely an open problem \cite{Yu2023-BindingProblem2}.

Binding is just as equally important for artificial models. Compositional learning, a hallmark of generalization and reasoning for deep learning models \cite{uselis2025doesdatascalinglead, lake2018generalizationsystematicitycompositionalskills, hupkes2020compositionalitydecomposedneuralnetworks, wiedemer2023compositionalgeneralizationprinciples}, relies crucially on the ability to bind the compositionally learned concepts into novel combinations (e.g., having learned the concepts of animals, body parts, and motion, can a model understand a flying penguin?) \cite{greff2020bindingproblemartificialneural}. 

However, artificial vision and vision-language models perform markedly worse at binding than the brain. In multi-object scenes, especially when objects are cluttered and/or share features, the ability of modern vision and vision–language models to correctly identify objects and their corresponding features drops noticeably \cite{campbell2025understandinglimitsvisionlanguage, zhang2024farintelligentvisualdeductive, lewis2024doesclipbindconcepts,yuksekgonul2023visionlanguagemodelsbehavelike,assouel2025visualsymbolicmechanismsemergent}. For instance, \cite{campbell2025understandinglimitsvisionlanguage} shows that the ability of a vision-language model to accurately describe objects and their features monotonically decreases as the number of feature-sharing object triplets increases in a scene (Figure~\ref{fig:triplet}), with similar trends observed in object counting, search, and analogy reasoning tasks. As another example, ~\citep{zhang2024farintelligentvisualdeductive} prompts vision-language models to describe objects in a given block of a Raven matrix (Figure~\ref{fig:triplet}). The model instead combines elements from adjacent blocks, attributing them to the same object. They further show that segmenting the grid into separate images before passing them in significantly reduces such errors. The failure of artificial vision models in binding as compared to the brain raises a natural question: to what extent do artificial models reliably learn binding information? Is there a framework that can precisely define binding information learned by a model, and is there a formal way of measuring binding as defined by the theory?

Using an information-theoretic approach, we define binding as the information a representation contains about which objects are present or absent among all possible objects. We then introduce several variants of this definition that separate binding information from feature information, or normalize over dataset binding uncertainty priors. We then introduce a probing method to measure binding information defined as such. Finally, we apply this framework to state-of-the-art Vision Transformers, measuring the binding information learned in the image-level summary token \texttt{[CLS]} and the full set of spatial tokens, on datasets with different binding challenges. Overall, our contributions are as follows:
\begin{itemize}
    \item We introduce an information-theoretic framework that defines binding and a probing method for measuring binding information in model representations.
    \item We show that the image-level summary token \texttt{[CLS]} encodes less than half of all binding information in a multi-object scene; for the binding information it does encode, it is largely structured quadratically.
    \item We show that binding is encoded almost perfectly in the full set of spatial tokens, decodable using a simplified attention probe.
    \item We perform probing experiments on a variety of models and datasets with different binding challenges, and analyze the binding information learned under each scenario.
\end{itemize}
\begin{figure}
    \centering
    \includegraphics[width=1\linewidth]{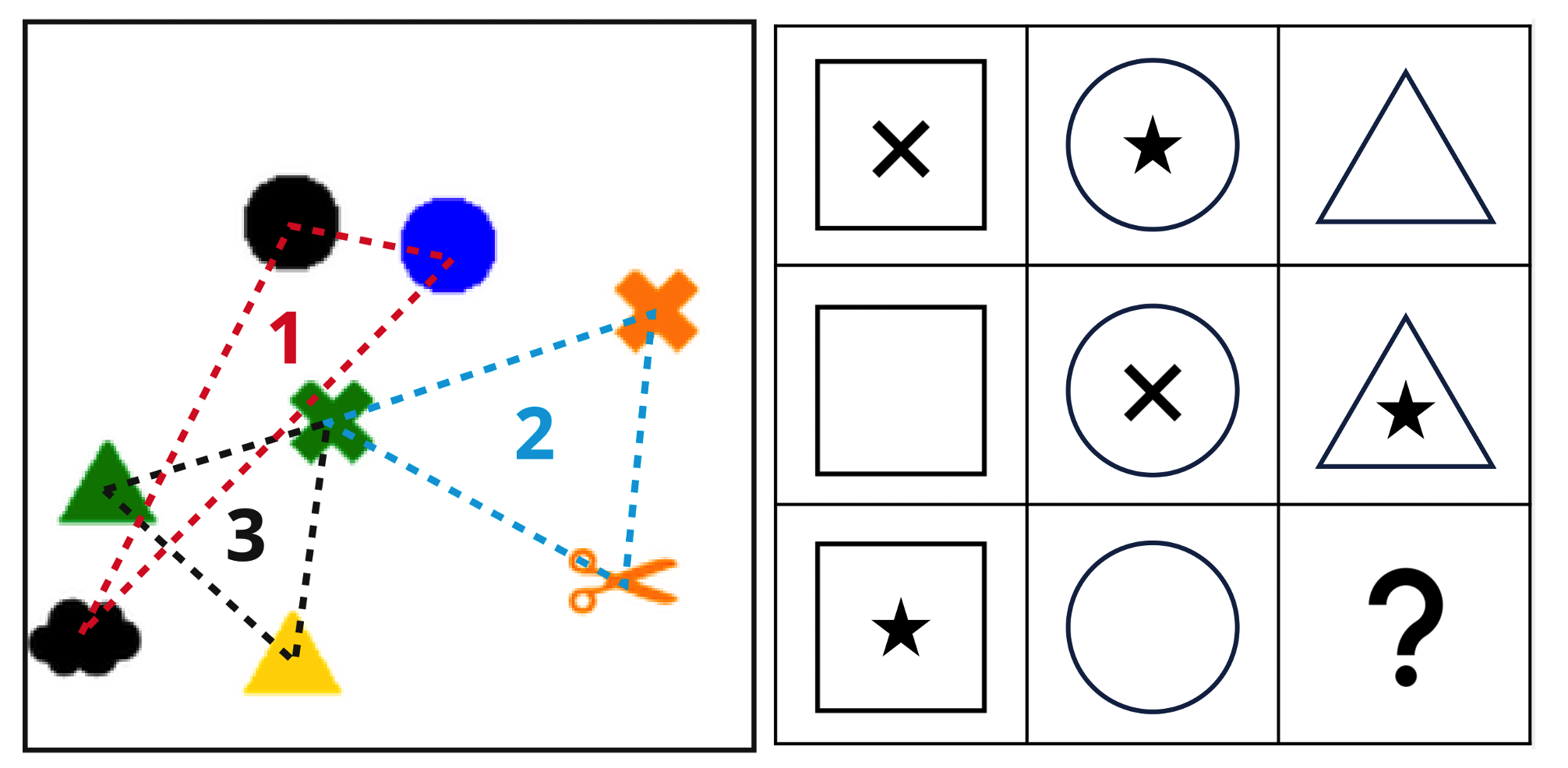}
    \caption{\textbf{Binding failures.} \textbf{Left:}~\cite{campbell2025understandinglimitsvisionlanguage} shows that the ability of a vision-language model to accurately describe objects and their features in a scene degrades monotonically as the number of feature-sharing triplets (or the total number of objects) increases in a scene. \textbf{Right:}~\citep{zhang2024farintelligentvisualdeductive} prompts vision-language models to describe grid patterns in the Raven matrix. For the middle-right
    block, the model outputs “a triangle with an X inside,” combining elements from the middle-center and the middle-right blocks. They further show
    that segmenting the grid into separate images before passing them in significantly
    reduces such errors. Figures reproduced from the papers.}
    \label{fig:triplet}
\end{figure}

\begin{figure*}[t]
    \centering
    \includegraphics[width=1.0\linewidth]{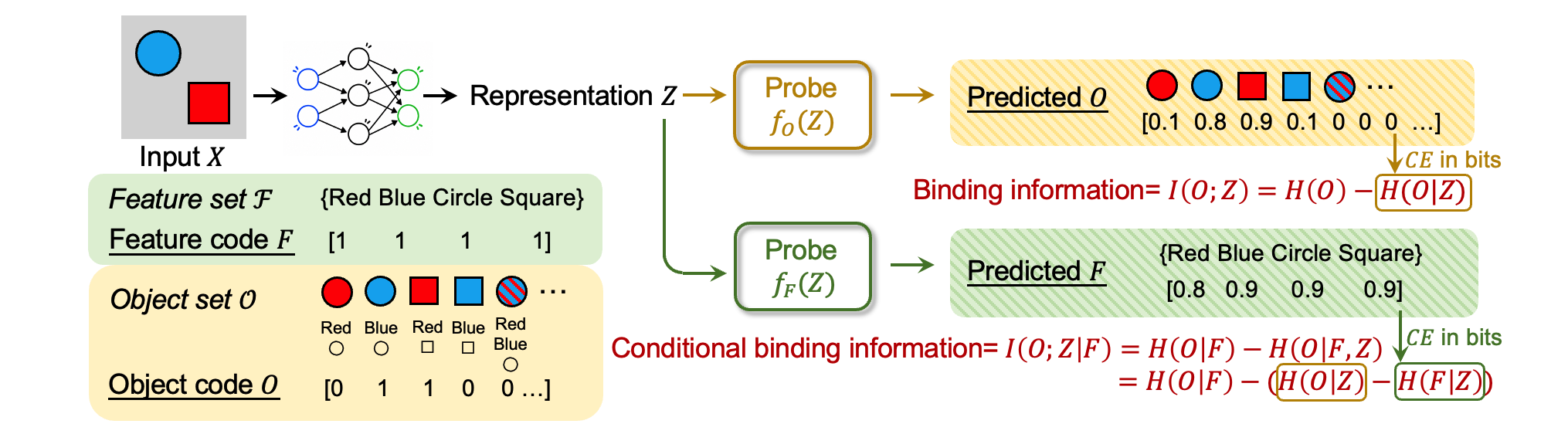}
    \caption{\textbf{Binding theory and probing framework.} We first define the space of features and objects of interest, as shown in features set $\mathcal{F}$ and object set $\mathcal{O}$ (Definitions \ref{featuredef}, \ref{object_def}). Each object is a combination of features from the feature set (Definition \ref{object_def}, Remark \ref{map}). From there, we define the feature code $F$ and object code $O$, which are random vectors that denote the presence and absence of each feature and object in a scene $X$. 
    Binding information $I(O;Z)$ is the information of the object code in the representation $Z$ (Definition \ref{bindinfo}). Conditional binding information is the information of the object code in the representation \textit{beyond what can be explained by features}, hence $I(O; Z\mid F)$ (Definition \ref{binding_cond}). Using information theory, the $I$ terms can be decomposed into entropies $H$. $H(O\mid Z)$ and $H(F\mid Z)$ are the uncertainties of the object and feature codes in the representation, which can be estimated by training object code probe $f_{O}(Z)$ and feature code probe $f_{F}(Z)$ on the representation with ground truth labels of the object and feature codes (Section \ref{bindingprobe}). The figure shows hypothetical probe-predicted logits for each element of the object and feature code. The cross-entropy losses of the object and feature probes are provable estimates of $H(O\mid Z)$ and $H(F\mid Z)$ (Theorem \ref{probeapprox}, Lemma \ref{bound}). The remaining $H(O)$ or $H(O\mid F)$ terms are dataset priors which can be independently calculated from the distribution of the $F$ and $O$ in the dataset (Section \ref{dataprior}). Combining these terms, we arrive at a probe-estimated value of binding information and conditional binding information.}
\end{figure*}

\section{Methods}

Here we introduce an information-theoretic framework of formalizing binding, and show how binding information can be measured by probes. A scene has a set of features that are present. For the scene to make sense there are also objects, which are characterized by their specific conjunctions of features. And then there is the overall scene which is a collection of potential objects that are present or absent. Binding information is the information about which objects are present and absent. 

\subsection{Defining Binding Information}
\label{definebindinginfo}
\begin{definition}[Features]
    Let the $f_i$ be discrete features. For example, they could be red, square, smooth, human, running, velocity=3m/s, occluded, top-left, etc.

Let $\mathcal{F} =\{f_1, f_2, \cdots, f_n\}$ be a finite set of all features of interest.
\label{featuredef}
\end{definition}

\begin{definition}[Feature code]

One way of looking at a scene is in terms of existence of features. Some of the features may be shared by multiple objects (e.g. a blue bag and a blue hat). But the feature existence vector $F$ is useful to characterize which features are present.
We define the feature code of scene as a finite random vector 
\[
F = [F_1, F_2, \dots, F_n],
\]
where 
\[
F_i = 
\begin{cases}
1 & \text{if feature }f_i\text{ is present in the scene,} \\
0 & \text{otherwise.}
\end{cases}
\]
\end{definition}

\begin{definition}[Objects]
    While $F$ does transmit which features are present, it does not describe how features jointly make up objects. An object can be described by the collection of its features. We posit that each object corresponds to a \emph{distinct} subset of $\mathcal{F}$.
    \[
    o_i\rightarrowtail \{f_a, f_b, \dots\} \subseteq \mathcal{F}
    \]
    For example, $o_i$ can be ``red, blue, smooth cube," ``human running at velocity=3m/s," or ``top-left occluded triangle."\label{object_def}
\end{definition}

\begin{remark}
    We posit that no two objects can correspond to the exact same set of features. In practice, \textit{location} is often a  discriminator for objects that may share all other features.
     
    For this reason, we consider the object to feature subset mapping a surjective map. More formally, let $\mathcal{O}=\{o_1, o_2, \cdots, o_n\}$ be a finite set of all objects of interest. Then there exist a \textit{surjective} mapping $\psi$ such that
    \[
    \psi: \mathcal{O} \rightarrowtail \mathcal{P}(\mathcal{F}),
    \]
    where $\mathcal{P}(\mathcal{F})$ denotes the power set of $\mathcal{F}$.\label{map}
\end{remark}

\begin{definition}[Object code]
An object code of a scene is a random vector 
\[
O = [O_1, O_2, \dots]
\]
where 
\[
O_i = 
\begin{cases}
1 & \text{if object }o_i\text{ is present in the scene,} \\
0 & \text{otherwise.}
\end{cases}
\]\label{objectcode}
\end{definition}

\begin{remark}
    To bind is to specify the object code of a scene, i.e., which objects exist and which objects do not (Definition \ref{bindinfo}). One potential difficulty of binding is that objects may share some (though not all) features with each other. For example, a scene may contain a red square, a blue circle, a red circle, but not a blue square. Their shared features can lead to binding errors under particular encoding schemes, especially when
    object encoding is correlated with feature encoding \cite{campbell2025understandinglimitsvisionlanguage, greff2020bindingproblemartificialneural, TREISMAN1982107}.
\end{remark}

\begin{remark}
    We note that our definition of object code is fully agnostic of the specific encoding scheme of objects of a model. While we \textit{define} objects in feature-conjunctive ways (Definition \ref{object_def}), they may be encoded in many ways including tensor-product of features \cite{10.1016/0004-3702(90)90007-M}, slots \cite{locatello2020object}, object files \cite{KAHNEMAN1992175}, etc. For this reason, learning features well does \textit{not} necessarily make learning objects easy for a model: in our definition, we do not \textit{assume} their correlation or causation (although there usually \textit{is} some correlation, see Definition \ref{binding_cond}).
\end{remark}

\begin{remark}
    The distinction between features and objects depends on our abstraction level. Objects can be features, and features can be objects, under different feature and object sets $\mathcal{F, O}$. For example, ``human" may be a feature among ``chimpanzee" and ``gorilla," but it may be an object when ``mammal," ``bipedal," ``vertebrate" are features.\footnote{This can even be true within a neural network: objects learned by one layer may become features for the next layer. Neural networks may even be considered as ``binding networks."} 
\end{remark}

\begin{remark}
    Assuming access to the object-to-feature-subset map $\psi$ (Remark \ref{map}), for any particular scene, $F$ is a deterministic function of $O$. That is, once we know which objects exist in a scene, we also know which features exist, by virtue of the object-to-feature map.\label{deter}
\end{remark}

We consider a model \(\Phi : \mathcal{X} \to \mathcal{Z}\) which maps scenes to learned representations. Let \(X\in \mathcal{X} \) be a random variable for the scene, and $Z \in \mathcal{Z}$ be a random variable for the representation. We consider $F$ and $O$ to be deterministic functions of $X$, given an oracle scene observer. The model $\Phi$, however, may \textit{lose} information of $F$ and $O$ in its representation of the scene $X$, i.e., they are irrecoverable even with the optimal decoder. We therefore define binding information in a representation $Z$ as follows:

\begin{definition}[Binding information]
Let $I(A;B)$ denote mutual information between random variables $A$ and $B$. 

    We define binding information in the representation $Z$ of a scene, relative to object set $\mathcal{O}$ (and feature set $\mathcal{F}$), as
    \[
    \mathbf{B}_{\mathcal{O}}(Z):= I(O;Z),
    \]
    in unit of \textit{bits}. In plain language, it is the information contained in the representation about the object code, i.e., about which objects are present and absent in a scene.
    \label{bindinfo}
\end{definition}

\begin{remark} Let $H(A)$ denote the entropy of the random variable $A$, and $H(A\mid B)$ be its conditional entropy on $B$.
    \begin{equation}
        \mathbf{B}_{\mathcal{O}}(Z):= I(O;Z)=H(O)-H(O\mid Z)\label{entropydef}
    \end{equation}
    From the lens of entropy, binding information is also the reduction in the uncertainty of object code $O$ ($H(O)$) due to knowledge of the representation $Z$.
\end{remark}

\begin{definition}[Feature information] Similarly, we can define feature information in the representation $Z$ as $I(F;Z)$, relative to feature set $\mathcal{F}$ (and object set $\mathcal{O}$).
\end{definition}

Note that feature and object information are generally not independent within a representation. In particular, a representation that encodes features well \textit{can} make objects easier to decode. Sometimes, however, we wish to measure the binding information contained in a representation beyond what can be explained by feature information alone. This may be especially desirable when comparing models with different feature learning capabilities,\footnote{For example, there are architectures that are specifically designed for binding or binding-like operations, but they may be limited feature learners, possibly due to the poor scalability of their binding-related inductive biases. Examples of this may include tensor product architectures \cite{10.1016/0004-3702(90)90007-M}, capsule networks \cite{sabour2017dynamicroutingcapsules}, early slot attention models \cite{locatello2020object, seitzer2023bridginggaprealworldobjectcentric}, neural module networks \cite{andreas2017neuralmodulenetworks}, etc.} or when comparing scene distributions with different feature learning complexities.\footnote{For example, natural datasets may contain high-level semantic features that are difficult to learn, but learning the object code may not be more difficult than, say, synthetic datasets with synthetic features.} Under these scenarios, we may wish to measure binding information while controlling for decodable feature information. This can be done naturally by conditioning binding information on knowledge of the feature code.

\begin{definition}[Binding information, conditioned on feature code]

We define binding information in a representation $Z$, conditioned on feature code $F$, as
\[
\mathbf{B}_{\mathcal{O, F}}^{*}(Z):=I(O;Z\mid F)
\]
\label{binding_cond}
\end{definition}
\begin{theorem}
    \[
    \mathbf{B}_{\mathcal{O, F}}^{*}(Z)=I(O;Z)-I(F;Z).
    \]
    Proof in Appendix \ref{pf:dpi}.
\label{minusbinding}
\end{theorem}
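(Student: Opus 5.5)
The plan is to use the chain rule for mutual information together with Remark \ref{deter}, which says that $F$ is a deterministic function of $O$ (through the object-to-feature map $\psi$). The chain rule lets us expand the joint information $I(O,F;Z)$ in two different orders:
\begin{equation*}
I(O,F;Z) = I(F;Z) + I(O;Z\mid F) = I(O;Z) + I(F;Z\mid O).
\end{equation*}
Setting the two expansions equal gives
\[
\mathbf{B}_{\mathcal{O},\mathcal{F}}^{*}(Z) = I(O;Z\mid F) = I(O;Z) + I(F;Z\mid O) - I(F;Z).
\]
The theorem then reduces to proving $I(F;Z\mid O)=0$.

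To show this, write $I(F;Z\mid O) = H(F\mid O) - H(F\mid O,Z)$. Since $F=g(O)$ for a deterministic $g$ induced by $\psi$, we have $H(F\mid O)=0$. Conditioning cannot increase entropy, and entropy of a discrete variable is nonnegative, so $0\le H(F\mid O,Z)\le H(F\mid O)=0$. Hence $I(F;Z\mid O)=0$. Equivalently, $F - O - Z$ is trivially a Markov chain, which is the data-processing viewpoint suggested by the appendix label. An alternative route is to note that $(O,F)$ carries the same information as $O$ alone, so $I(O,F;Z)=I(O;Z)$ directly; this removes the second expansion and yields $I(O;Z\mid F)=I(O;Z)-I(F;Z)$ immediately from the first.

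The main point needing care is the technical setting. $Z$ may be continuous, so conditional entropies of $Z$ itself could be ill-defined. To avoid this, I will phrase every step using only entropies of the discrete variables $O$ and $F$, for example $I(O;Z\mid F)=H(O\mid F)-H(O\mid F,Z)$. Alternatively, I can rely on the measure-theoretic chain rule for mutual information, which holds whenever the quantities are finite; finiteness is guaranteed here because $O$ and $F$ are finite-valued. The other prerequisite is that $F$ really is a function of $O$. Remark \ref{deter} asserts this, since each present object contributes its features $\psi(o_i)$ and so $F_j=\max_{i:\,f_j\in\psi(o_i)} O_i$.
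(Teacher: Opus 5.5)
Your proof is correct and is essentially the paper's argument. The paper adds $I(F;Z)=H(F)-H(F\mid Z)$ and $I(O;Z\mid F)=H(O\mid F)-H(O\mid F,Z)$, applies the entropy chain rule to get $H(O,F)-H(O,F\mid Z)$, and then uses that $F$ is a function of $O$ to reduce this to $H(O)-H(O\mid Z)=I(O;Z)$; this is exactly your ``alternative route'' $I(O,F;Z)=I(O;Z)$, and your main route via $I(F;Z\mid O)=0$ is the same fact in an equivalent form, written, as in the paper, only in entropies of the discrete variables $O$ and $F$.
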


\begin{remark}
    We note that conditioning on $F$ does not result in a \textit{counterfactual} measure of binding information \textit{as it would if a feature code were supplied to the model during inference}. The representation $Z$ is fixed, so we are \textit{adjusting} binding information \textit{post-hoc} during measurement, for the amount of feature information, as is shown here.
\end{remark}
\begin{theorem}
    $\mathbf{B}_{\mathcal{O, F}}^{*}(Z)$ can be similarly written in entropy form:
    \begin{equation}
            \begin{gathered}
         \mathbf{B}_{\mathcal{O, F}}^{*}(Z)=H(O\mid F)-H(O \mid Z, F )\\=H(O)-H(O\mid Z)-H(F)+H(F\mid Z).
    \end{gathered}
    \label{bindingcondexpanded}
        \end{equation}
Proof in Appendix \ref{pf:dpi}.
\label{bindcondeq}  
\end{theorem}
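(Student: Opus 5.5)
The plan is to prove the two equalities in \eqref{bindingcondexpanded} separately. The first follows from the definition of conditional mutual information. The second reduces to Theorem~\ref{minusbinding} once the mutual informations are expanded into entropies. The one structural input is Remark~\ref{deter}: $F$ is a deterministic function of $O$, say $F=g(O)$, so $H(F\mid O)=0$ and $H(F\mid O,Z)=0$.

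For the first equality, I would expand Definition~\ref{binding_cond} using the standard identity
\[
I(O;Z\mid F)=H(O\mid F)-H(O\mid Z,F).
\]
This holds for arbitrary random variables and gives the first line directly; no property of $F$ is needed.

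For the second line, the main route is to invoke Theorem~\ref{minusbinding}, which gives $\mathbf{B}_{\mathcal{O,F}}^{*}(Z)=I(O;Z)-I(F;Z)$. Writing $I(O;Z)=H(O)-H(O\mid Z)$ as in \eqref{entropydef}, and likewise $I(F;Z)=H(F)-H(F\mid Z)$, then yields
\[
H(O)-H(O\mid Z)-H(F)+H(F\mid Z).
\]
As an independent check that does not lean on Theorem~\ref{minusbinding}, I would verify that the two lines agree using the chain rule together with determinism. Since $H(F\mid O)=0$, we have
\[
H(O,F)=H(O)+H(F\mid O)=H(O),
\]
so $H(O\mid F)=H(O,F)-H(F)=H(O)-H(F)$. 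Since also $H(F\mid O,Z)=0$, the same argument conditioned on $Z$ gives
\[
H(O\mid Z,F)=H(O,F\mid Z)-H(F\mid Z)=H(O\mid Z)-H(F\mid Z).
\]
Subtracting the second identity from the first reproduces the second line.

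The only delicate point is the conditional version of determinism, $H(F\mid O,Z)=0$. This needs $F=g(O)$ to hold almost surely on the joint distribution of $(X,Z)$, which is guaranteed because $F$ and $O$ are deterministic functions of $X$ and $\psi$ is fixed. A further technicality is that $Z$ may be continuous. In that case I would interpret the conditional entropies of the discrete variables $O$ and $F$ given $Z$ in the usual conditional sense, so that every quantity in the display stays finite and nonnegative and the chain-rule manipulations are valid.
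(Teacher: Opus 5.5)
Your proposal is correct, and the argument you label an independent check is exactly the paper's proof: expand $I(O;Z\mid F)=H(O\mid F)-H(O\mid Z,F)$, then use $H(F\mid O)=H(F\mid O,Z)=0$ to get $H(O\mid F)=H(O)-H(F)$ and $H(O\mid Z,F)=H(O\mid Z)-H(F\mid Z)$. Your main route via Theorem~\ref{minusbinding} is also sound and not circular, because the paper proves that theorem directly from the chain rule and the determinism of $F$ given $O$; that route just moves the same determinism argument one step earlier.
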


Some scene distributions may have higher prior uncertainties in $O$, as measured by $H(O)$. Binding information for a distribution is, at best, $H(O)$, since \[\mathbf{B}_{\mathcal{O}}(Z):=I(O;Z)=H(O)-H(O\mid Z)\leq H(O).\] When comparing the difficulty of binding across different scene distributions, we may wish to remove the scale effect of this prior $H(O)$. This can be done by normalizing over $H(O)$.

\begin{definition}[Binding information, normalized by prior uncertainty of object code] We define binding information normalized by the uncertainty of $O$ in the scene distribution as 
    \[
    \mathbf{\beta}_{\mathcal{O}}(Z):= \frac{I(O;Z)}{H(O)}=1-\frac{H(O\mid Z)}{H(O)},
    \]
without units.

Similarly, a normalized measure for the conditional binding information is
\[
    \mathbf{\beta^*}_{\mathcal{O, F}}(Z):= \frac{I(O;Z\mid F)}{H(O\mid F)}=1-\frac{H(O\mid Z, F)}{H(O\mid F)}.
\]\label{norm}
\end{definition}

\begin{remark}
    The normalized binding information can therefore be interpreted as the \textit{proportion} of object code uncertainty in the scene distribution ($H(O)$ or $H(O\mid F)$) resolved by knowledge of the representation. 
\end{remark}

\begin{remark}
    Note that removing the scale effect of $H(O)$ or $H(O\mid F)$ does not remove other influences from the scene distribution, such as occlusion, background clutter, etc., which are often \textit{the} points of comparison across datasets.
\end{remark}

With binding information defined, we now show how it can be conveniently decoded by probes from model representations. Across variants above, there are two key terms: a representation-dependent term $H(O\mid Z)$ or $H(O\mid Z, F)$, and a prior term depending only on the data distribution, $H(O)$ or $H(O\mid F)$. We show how they can be calculated in Sections \ref{bindingprobe} and \ref{dataprior}, respectively. And we summarize the full probing procedure in Section \ref{sumprobe}.

\subsection{Probing Object Code on the Representation} 
\label{bindingprobe}
We start with $H(O\mid Z)$ which is defined as
\[H(O\mid Z)
=
\mathbb E_{(z,o)\sim p(Z,O)}
\left[-\log p(o\mid z)\right].\]

A straightforward approach is to train a probe $q_{\theta}(o\mid z)$ to approximate $p(o\mid z)$. If we use cross-entropy loss 
\begin{equation}
    \mathcal L_{\mathrm{CE}}(\theta)
=
\mathbb E_{(z,o)\sim p(Z,O)}
\left[-\log q_\theta(o\mid z)\right], \label{celoss}  
\end{equation}
we will be directly approximating $H(O\mid Z)$, provided that $q_{\theta}(o\mid z)$ is a good approximation of $p(o\mid z)$. The following theorem captures this intuition:

\begin{theorem}[Probe approximates binding uncertainty] Let $\infdiv{p(a\mid b)}{q(a\mid b)}$ be the conditional relative entropy (Kullback–Leibler distance) between the two conditional probabilities $p(a\mid b)$ and $q(a\mid b)$, where $(a,b)\sim p(A,B)$. Then we have the following:
\[\mathcal L_{\mathrm{CE}}(\theta)=H(O\mid Z)+\infdiv{p(o\mid z)}{q_{\theta}(o\mid z)}.
\]\label{probeapprox}
Proof in Appendix \ref{pf:dpi}.
\end{theorem}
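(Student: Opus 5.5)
The plan is to prove the identity by adding and subtracting $\log p(o\mid z)$ inside the expectation that defines $\mathcal L_{\mathrm{CE}}(\theta)$ in \eqref{celoss}, and then splitting the expectation into two terms. For every pair $(z,o)$ with $p(o\mid z)>0$,
\[
-\log q_\theta(o\mid z) = -\log p(o\mid z) + \log\frac{p(o\mid z)}{q_\theta(o\mid z)} .
\]
Taking $\mathbb E_{(z,o)\sim p(Z,O)}$ of both sides and using linearity of expectation gives two pieces.

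The first piece is $\mathbb E_{(z,o)\sim p(Z,O)}[-\log p(o\mid z)]$. This is exactly the definition of $H(O\mid Z)$ given just before the statement.

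The second piece is $\mathbb E_{(z,o)\sim p(Z,O)}\bigl[\log \tfrac{p(o\mid z)}{q_\theta(o\mid z)}\bigr]$. I will rewrite it by the tower property as
\[
\mathbb E_{z\sim p(Z)}\Bigl[\sum_{o} p(o\mid z)\log\frac{p(o\mid z)}{q_\theta(o\mid z)}\Bigr].
\]
The sum over $o$ is finite because $O$ is a finite binary vector. The inner sum is the ordinary KL divergence between the conditional distributions at a fixed $z$. Its average over $z$ is by definition the conditional relative entropy $\infdiv{p(o\mid z)}{q_{\theta}(o\mid z)}$ in the statement, which uses $(a,b)\sim p(A,B)$. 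Adding the two pieces yields the claimed equality.

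The only delicate step is measure-theoretic bookkeeping, since $Z$ may be continuous. To handle this:
\begin{itemize}
\item The joint expectation should be written as an integral of $p(z)$ times a finite sum over $o$; no density for $O$ is needed, since $O$ is discrete.
\item Pairs with $p(o\mid z)=0$ contribute zero to every term, by the convention $0\log 0=0$.
\item If $q_\theta(o\mid z)=0$ on a set of positive $p$-mass, both sides equal $+\infty$, so the identity still holds.
\end{itemize}

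I would note in passing that Gibbs' inequality makes the KL term nonnegative. Hence $\mathcal L_{\mathrm{CE}}(\theta)\ge H(O\mid Z)$, with equality iff $q_\theta=p$ almost everywhere. This motivates using the probe loss as an upper-bound estimate.
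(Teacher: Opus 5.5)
Your proof is correct and follows the paper's argument: add and subtract $\log p(o\mid z)$ inside the expectation, then identify the first term as $H(O\mid Z)$ and the second as the conditional KL divergence. Your extra bookkeeping (tower property over a finite sum in $o$, the $0\log 0$ convention, and the $+\infty$ case) goes beyond what the paper writes but is sound and does not change the route.
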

\begin{lemma}
    Since $\infdiv{p}{q_\theta}\geq 0$, we have 
    \[
    \mathcal L_{\mathrm{CE}}(\theta)\geq H(O\mid Z).
    \]\label{bound}
\end{lemma}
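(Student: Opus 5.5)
The plan is to read Lemma~\ref{bound} off the decomposition in Theorem~\ref{probeapprox}. The only real work is to justify that the conditional relative entropy term $\infdiv{p(o\mid z)}{q_{\theta}(o\mid z)}$ is nonnegative.

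\textbf{Step 1.} Starting from Theorem~\ref{probeapprox}, I write
\[
\mathcal L_{\mathrm{CE}}(\theta)-H(O\mid Z)=\infdiv{p(o\mid z)}{q_{\theta}(o\mid z)}.
\]
I then unfold the conditional KL as an average of ordinary KL divergences over the conditioning variable:
\[
\infdiv{p(o\mid z)}{q_{\theta}(o\mid z)}=\mathbb E_{z\sim p(Z)}\Big[\sum_{o} p(o\mid z)\log\frac{p(o\mid z)}{q_\theta(o\mid z)}\Big].
\]
Because $O$ is a finite binary vector, the inner sum is over finitely many values.

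\textbf{Step 2.} For each fixed $z$, I show the inner sum is nonnegative by Gibbs' inequality, that is, Jensen's inequality applied to the concave function $\log$:
\[
-\sum_o p(o\mid z)\log\frac{q_\theta(o\mid z)}{p(o\mid z)}\ \ge\ -\log\sum_{o:\,p(o\mid z)>0} q_\theta(o\mid z)\ \ge\ -\log 1=0.
\]
This uses that $q_\theta(\cdot\mid z)$ is a probability distribution over object codes, so its mass on the support of $p(\cdot\mid z)$ is at most $1$.

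\textbf{Step 3.} Taking the expectation over $z$ preserves nonnegativity. Hence $\infdiv{p}{q_\theta}\ge 0$, and therefore $\mathcal L_{\mathrm{CE}}(\theta)\ge H(O\mid Z)$.

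\textbf{Main obstacle.} The only delicate point is well-definedness. If $q_\theta(o\mid z)=0$ where $p(o\mid z)>0$, both sides are $+\infty$ and the inequality holds trivially. Terms with $p(o\mid z)=0$ contribute $0$ by the convention $0\log 0=0$. If $Z$ is continuous, I need the conditional distributions $p(o\mid z)$ to be regular. Since $O$ is discrete, this is standard and only needs to be stated.
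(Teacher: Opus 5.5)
Your proposal is correct and follows the paper's route: the lemma is read off from the decomposition $\mathcal L_{\mathrm{CE}}(\theta)=H(O\mid Z)+\infdiv{p(o\mid z)}{q_\theta(o\mid z)}$ of Theorem~\ref{probeapprox} together with nonnegativity of the conditional KL divergence, which the paper simply takes as a standard fact. Your Jensen/Gibbs argument and edge-case remarks only make that standard fact explicit. The zero-probability case is moot for the paper's probes in any event, since the sigmoid-based autoregressive factorization gives a normalized $q_\theta(\cdot\mid z)$ with $q_\theta(o\mid z)>0$ everywhere.
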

\begin{remark}
    Lemma \ref{bound} shows that the further we can minimize $\mathcal{L}_{\mathrm{CE}}(\theta)$, the closer we will be to ground truth $H(O\mid Z)$. This requires us to train the probes properly until convergence, with the appropriate training schedule, regularization (since we report cross-entropy loss on the test set), and particularly, a good model family to begin with. We therefore compare a range of probe families in our experiments, such as linear, quadratic, deep neural networks, and attention (more in Section \ref{results}). \label{mini}
\end{remark}
A practical challenge in training a $q_\theta(o\mid z)$ probe is the exponential space of its labels $o$ (recall from Definition \ref{objectcode} that $O=o$ is a binary \textit{vector}), which would require an exponential size dataset to cover the entire distribution space. We can circumvent this by decomposing the probe as follows:
\begin{equation}
q_\theta(o\mid z)
=
\prod_{k=1}^K q_{\theta}(o_k\mid o_{<k},z)
, \label{autoreg}
\end{equation}
where $o_k$ denotes the $k$-th component of the vector $o$, and $o_{<k}$ is the shorthand for $o_1,\cdots,o_{k-1}.$\footnote{There is some abuse of notation here since earlier $o_i$ denotes objects (Definition \ref{object_def}). Here $o_k$ are the binary values that each $O_k$ can take, indicating whether the object is present or absent.} This allows each $q_{\theta}(o_k\mid o_{<k},z)$ to be trained individually with a single-dimension label for $o_k$ while incorporating $o_{<k}$ as prior. This amounts to probing for each object $o_k$ in the representation, with the ``code" for the prior $o_{<k}$ objects given (Definitions \ref{object_def}, \ref{objectcode}).\footnote{Note that a naive decomposition such as $q_\theta(o\mid z)
=
\prod_{k=1}^K q_\theta(o_k\mid z)$ is not true in general, since it does not account for the dependencies between $o_k$'s given $z$. However, we can make such assumptions \textit{for our probe}. We perform a comparison between these two probe families in Appendix \ref{ablation}.} Their cross-entropy loss can then be summed to obtain the overall loss in Eq. \eqref{celoss}:
\begin{equation}
\mathcal{L}_{\text{CE}}(\theta)
=
\sum_{k=1}^K
\mathbb E_{(z,o)\sim p(Z,O)}
\left[
-\log q_{\theta}(o_k\mid o_{<k},z)
\right].
\label{sumloss}
\end{equation}
To approximate $H(F\mid Z)$ in the conditional definition (Theorem \ref{bindcondeq}), we can similarly use probes $q_\theta(f_k\mid f_{<k}, z)$ and sum their cross-entropy loss to obtain the $\mathcal{L}_{\text{CE}}(\theta)$ for features, which will then be our probe estimate of $H(F\mid Z)$.

\subsection{Estimating Dataset Priors}
\label{dataprior}
The prior terms in the binding definitions such as $H(O)$ and $H(F)$ can be empirically calculated using their entropy definition: 
\[H(O)=\mathbb{E}_{o\sim p(O)}\left[-\log p(o)\right],\]

where $p(O)$ is controllable for the synthetically generated datasets, similarly for $H(F)$.\footnote{For natural datasets, since we do not directly control the distribution, we make particular assumptions on the data to make approximations on these quantities. Appendix \ref{app:hbf} contains more details.}
Appendix \ref{app:hbf} contains examples of calculating these quantities for our datasets.

\subsection{Summary: Probing Binding on Model Representations}
\label{sumprobe}

Finally, we summarize the procedure for probing binding information from model representations $Z$:

\begin{enumerate}
    \item Find the dataset prior $H(O)$ (and $H(F)$ for conditional definition)
    \item Train binding probes $q_\theta (o_k\mid o_{<k}, z)$ (and $q_\theta (f_k\mid f_{<k}, z)$ for conditional definition)
    \item Sum their cross-entropy loss to obtain a probe estimate of $H(O\mid Z)$ (and $H(F\mid Z)$ for conditional definition)
    \item Calculate binding information using Eq. \eqref{entropydef} (Eq. \eqref{bindingcondexpanded} for conditional definition), with the option of normalization using Definition \ref{norm}.
\end{enumerate}

\section{Results}
\label{results}
Next, we probe ViT representations with real data. In particular, we are interested in understanding how binding works with different architectural components of the ViT (such as the \texttt{[CLS]} and the spatial tokens), and on different datasets with various challenges for binding.

In Sections \ref{cls} and \ref{spatial}, we ask which type of image representation in the ViT architecture is best for encoding binding: are summary tokens sufficient (e.g., the \texttt{[CLS]} or a global average pooling), or do we need the full set of spatial tokens? Both summary and spatial tokens are commonly used in pre-training objectives and downstream models: pre-training frameworks such as CLIP \cite{radford2021learningtransferablevisualmodels}, simCLR \cite{chen2020simpleframeworkcontrastivelearning}, Barlow Twins \cite{zbontar2021barlowtwinsselfsupervisedlearning}, and other contrastive methods often use the \texttt{[CLS]} or globally pooled representations as the input to their training objectives; Vision-Language Models (VLMs), on the other hand, utilize the full set of spatial tokens in their cross-modality processing with language \cite{liu2023visualinstructiontuning, bai2023qwenvlversatilevisionlanguagemodel}. Studying how well binding is encoded in these different types of representations can thus guide better model development and evaluation. 

In Section \ref{data}, we study the data aspects of binding. We probe on datasets with different challenges for binding: datasets with an increasing number of features and objects in the space, datasets with different levels of occlusion, and natural datasets with high-level, semantic features.

\subsection{To what extent does the \texttt{[CLS]} summary token encode binding?}
\label{cls}

To test binding, we create a synthetic ColorShape dataset with 8 colors and 8 shapes in the feature space, and 64 objects in the object space, each taking one shape and one color. The resulting feature and object codes have $\dim(F)=16, \dim(O)=64$. To create a balanced dataset for both the feature and object codes, we sample each image as follows: randomly choose 6 colors and 6 shapes, and out of the 36 possible objects, choose 18 objects randomly that cover all 6 colors and 6 shapes chosen, and place them in the image at random locations without overlap. The resulting baseline accuracy is $6/8=75\%$ for feature probes $q_\theta (f_k\mid f_{<k}, z)$ and $1-18/64=71.9\%$ for object probes $q_\theta (o_k\mid o_{<k}, z)$, where the baseline is a trivial prediction of a constant 1 and 0, respectively.\footnote{Additionally, feature information has only minor leakage to objects: a baseline model that assumes all conjunctions of present features exist as objects will have accuracy $[18+(64-36)]/64=71.9\%$.} We report error reduction, (accuracy -- baseline) / (1 -- baseline) $\times 100\%$, in lieu of raw accuracy for our probes.

To test whether our probes learn generalizable patterns instead of memorization, we split the dataset into training, validation, and test sets. They each contain a disjoint set of feature codes $F$ and object codes $O$. The latter occurs naturally by virtue of the large combinatorial space of the object codes ($\sim10^{12}$), so every image almost certainly has a different object code. The feature code split is implemented separately. The resulting training set contains 39,200 samples, validation 9,800 samples, and test 9,800 samples. 

We then compute the dataset entropies $H(F)$ and $H(O)$ for each split. Since all feature and object codes have equal probability in our setup, we only need the number of all possible feature codes and object codes under the feature code constraint for each split, and then take their logarithm to find the entropies. Both can be calculated with basic combinatorics, which we detail in Appendix \ref{app:hbf}. On the test set for which we report binding information metrics, $H(O)=39.9$ bits and $H(F)=7.0$ bits.

Using this dataset, we probe for binding information from the \texttt{[CLS]} token of the DINOv2-Large ViT \cite{oquab2024dinov2learningrobustvisual}. We train autoregressive probes $q_\theta(o_k\mid o_{<k}, z)$ to approximate the $H(O\mid Z)$ term in the binding definition (one for each $o_k$, so 49 in total). To incorporate the conditional prior $o_{<k}$, we directly concatenate $o_{<k}$ with representation $z$: $x=[z || o_{<k}]$.\footnote{There are, of course, other ways to incorporate the $o_{<k}$ prior in our probe. We ablate on conditioning on $o_{<k}$ in Appendix \ref{ablation}.} We experiment with three probe families:
\begin{itemize}
    \item Linear probes: $\ell_k(x)=W_kx+b_k$
    \item Quadratic probes: $\ell_k(x)=x^\top W_kx+b_k$
    \item Deep neural network (DNN): $\ell_k(x)=f_{\mathrm{DNN}}(x)+b_k$ where $f_{\mathrm{DNN}}$ is 4 layers of 1024 width with GELU activations.
\end{itemize}
where $
q_\theta(o_k=1 \mid o_{<k}, z)
=
\sigma(\ell_k(x)).$

We show the object code and feature code probing results in Table \ref{cls_probe}. All three probe families generalize well to unseen object and feature codes. In terms of accuracy, feature decoding can reach $90\%$ in error reduction, while object decoding reaches merely around $65\%$ percent. Linear probes perform the worst in both cases. The DNN probe performs the best, although only slightly better than the much smaller quadratic probe. Given that DNNs are known as ``universal approximators" \cite{HORNIK1989359}, the DNN probes (with 3M parameters) reflect our best attempt at approaching the true binding (i.e., object) information in the representation, by closing the gap between the probe model $q_{\theta}(o\mid z)$ and the true distribution $p(o\mid z)$ (Lemma \ref{bound}). Since the quadratic probes achieve only slightly higher loss than the DNN probes, it is fair to assume that \textit{most} binding (i.e., object) information and feature information in the representation is quadratic, and that higher-order interactions add little to no extra information decodable.\footnote{That is, of course, assuming that the DNN probe indeed approaches the upper-bound of the binding information.}

\begin{table}[H]
\centering
\caption{\textbf{Average performance of object and feature code probes for different probe families when $z$ is the \texttt{[CLS]} token.} We report error reduction (ER) relative to the trivial majority-label baseline, which achieves $71.9\%$ accuracy for object codes and $75.0\%$ accuracy for feature codes.  Numbers in parentheses denote the number of parameters relative to the corresponding linear probe.}
\resizebox{\columnwidth}{!}{%
\begin{tabular}{llccc}
\toprule
Probe type & Probe family & Train loss $\downarrow$ / ER $\uparrow$ & Val loss $\downarrow$ / ER $\uparrow$ & Test loss $\downarrow$ / ER $\uparrow$ \\
\midrule
\multirow{3}{*}{$q_\theta(o_k \mid o_{<k}, z)$}
& Linear ($\times 1$)    & $33.8$ / $38.4\%$ & $34.3$ / $37.4\%$ & $34.2$ / $37.4\%$ \\
& Quadratic ($\times 64$) & $21.4$ / $64.4\%$ & $22.0$ / $63.3\%$ & $22.0$ / $63.0\%$ \\
& DNN ($\times 2953$)     & $\mathbf{19.9}$ / $\mathbf{65.8\%}$ & $\mathbf{20.6}$ / $\mathbf{64.4\%}$ & $\mathbf{20.6}$ / $\mathbf{64.4\%}$ \\
\midrule
\multirow{3}{*}{$q_\theta(f_k \mid f_{<k}, z)$}
& Linear ($\times 1$)     & $4.2$ / $73.2\%$ & $4.5$ / $69.6\%$ & $4.4$ / $70.4\%$ \\
& Quadratic ($\times 64$) & $1.5$ / $90.8\%$ & $1.6$ / $89.2\%$ & $1.6$ / $89.6\%$ \\
& DNN ($\times 3042$) & $\mathbf{1.1}$ / $\mathbf{92.4\%}$ & $\mathbf{1.3}$ / $\mathbf{90.8\%}$ & $\mathbf{1.3}$ / $\mathbf{91.2\%}$ \\
\bottomrule
\end{tabular}
}
\label{cls_probe}
\end{table}

To further understand how this quadratic binding information may be structured, we then devise a version of the quadratic probe where we reuse parameters across $o_k$ probes when the objects $o_k$ \textit{share features} (e.g., red square probe and red circle probe share the ``red" portion of their weights). More formally, we set $W_k=U_{\text{color}}^\top
V_{\text{shape}}$ and the probe becomes \[\ell_k(x)=x ^\top U_{\text{color}}^\top
V_{\text{shape}} x+b_k,\] where $U_{\text{color}}$ is shared for all objects with the same color, and $V_{\text{shape}}$ is shared for all objects with the same shape. From Table \ref{tab:weight_sharing_ablation}, we observe a small $2.4$-bit increase in loss relative to the non-parameter-reusing $o_k$ probes. This suggests the strong presence of quadratic binding (i.e., object) information in the \texttt{[CLS]} token \textit{as the dot product between color and shape projections}, a conjunctive encoding of objects based on features.

\begin{table}[H]
\centering
\caption{\textbf{Comparison of quadratic object code probes with and without parameter reuse.} We report error reduction (ER) relative to the trivial majority-label baseline, which achieves $71.9\%$ accuracy for object codes. Numbers in parentheses denote the number of parameters relative to the linear probe.}
\resizebox{\columnwidth}{!}{%
\begin{tabular}{lccc}
\toprule
Parameter reuse & Train loss $\downarrow$ / ER $\uparrow$ & Val loss $\downarrow$ / ER $\uparrow$ & Test loss $\downarrow$ / ER $\uparrow$ \\
\midrule
Yes ($\times 1/6$)
& $23.9$ / $58.7\%$ 
& $24.4$ / $58.0\%$ 
& $24.4$ / $57.7\%$ \\
No ($\times 1$)
& $21.4$ / $64.4\%$
& $22.0$ / $63.3\%$
& $22.0$ / $63.0\%$ \\
$\Delta$ (Yes -- No)
& $+2.5$ / $-5.7$ pp
& $+2.4$ / $-5.3$ pp
& $+2.4$ / $-5.3$ pp \\
\bottomrule
\end{tabular}
}
\label{tab:weight_sharing_ablation}
\end{table}

Next we compute the amount of binding information and proportion of dataset binding uncertainty resolved, with or without conditioning on features, as we have defined in Section \ref{definebindinginfo}. Table \ref{tab:binding_information} shows the results on the test set, using the probe losses tabled above. To be fair across probe types for the feature conditioned results, we use the quadratic feature probe loss for all of their feature terms, varying only the object probe type. The DNN probe, again, decodes the most amount of binding information, and only slightly more than the quadratic probe. Assuming that the DNN probe approaches the true binding information in the representation, our results show that the \texttt{[CLS]} token encodes less than half ($48.5\%$) of the binding information. If we wish to separate binding information from feature information, the \texttt{[CLS]} encodes only $42.4\%$ of the binding information beyond what can be explained by features.

\begin{table}[H]
\centering
\caption{\textbf{Probe-estimated binding information $\mathbf{B}_{\mathcal{O}}(Z)$, conditional binding information $\mathbf{B}^{*}_{\mathcal{O}, \mathcal{F}}(Z)$, and their normalized forms $\boldsymbol{\beta}_{\mathcal{O}}(Z)$ and $\boldsymbol{\beta}^{*}_{\mathcal{O}, \mathcal{F}}(Z)$}, recorded on the test set. $H(O)=39.9$ bits and $H(F)=7.0$ bits on this test set. All conditional results use the loss of the quadratic feature probe for the $H(F\mid Z)$ term, varying the object code probe type.}
\resizebox{\columnwidth}{!}{%
\begin{tabular}{lcccc}
\toprule
Probe type 
& $\mathbf{B}_{\mathcal{O}}(Z)$ (bits) $\uparrow$
& $\boldsymbol{\beta}_{\mathcal{O}}(Z)$ $\uparrow$
& $\mathbf{B}^{*}_{\mathcal{O}, \mathcal{F}}(Z)$ (bits) $\uparrow$
& $\boldsymbol{\beta}^{*}_{\mathcal{O}, \mathcal{F}}(Z)$ $\uparrow$ \\
\midrule
Linear & $5.7$ & $14.3\%$ & $0.3$ & $0.8\%$  \\
Quadratic & $17.9$ & $44.9\%$ & $12.5$ & $37.9\%$  \\
DNN & $\mathbf{19.4}$ & $\mathbf{48.5\%}$ & $\mathbf{13.9}$ & $\mathbf{42.4\%}$  \\
\bottomrule
\end{tabular}
}
\label{tab:binding_information}
\end{table}

\subsection{To what extent does the full set of spatial tokens encode binding?}
\label{spatial}

If a single summary token performs poorly for binding, would the full set of spatial tokens retain more binding information? Again, we use the same ColorShape dataset and DINOv2-Large ViT to probe for binding. Directly concatenating all spatial tokens, however, would produce a very high-dimensional input to the probe, making the number of parameters explode. So instead, we use a simplified attention probe that queries on the spatial tokens and learns their dynamically weighted mean:

Let $z=\{s_i\}_{i=1}^N$ denote the full set of final-layer spatial tokens of the ViT. Each probe $q_\theta(o_k\mid o_{<k}, z)$ learns a query vector $q_k$ conditioned on $o_{<k}$:
\[
q_k = g_k(o_{<k}).
\]
The query scores each spatial token by a dot product,
\[
a_{k,i}
=
\frac{\exp(q_k^\top s_i)}
{\sum_{j=1}^N \exp(q_k^\top s_j)},
\]
and forms a weighted spatial representation
\[
\bar{s}_k
=
\sum_{i=1}^N a_{k,i}s_i.
\]
A final quadratic readout layer then predicts
\[
q_\theta(o_k=1\mid o_{<k}, \{s_i\}_{i=1}^N)
=
\sigma({s}_k^\top W_k \bar{s}_k+b_k).
\]

This probe is simpler than a typical transformer attention layer in that it uses learned queries, but no learned key or value projections. As shown in Table \ref{attnprobeacc}, this simplified attention probe can perform at around $97\%$ in error reduction for object decoding, far outperforming the best DNN probe on the \texttt{[CLS]} token only.

\begin{table}[H]
\centering
\caption{\textbf{Average performance of attention probes with quadratic readout using the full set of spatial tokens}, compared to the best-performing \texttt{[CLS]} probe for object and feature code prediction. We report error reduction (ER) relative to the trivial majority-label baselines, which achieve $71.9\%$ accuracy for object codes and $75.0\%$ accuracy for feature codes.}
\resizebox{\columnwidth}{!}{%
\begin{tabular}{llccc}
\toprule
Probe type & Probe family & Train loss $\downarrow$ / ER $\uparrow$ & Val loss $\downarrow$ / ER $\uparrow$ & Test loss $\downarrow$ / ER $\uparrow$ \\
\midrule
\multirow{2}{*}{$q_\theta(o_k \mid o_{<k}, z)$}
& Attention + spatial
& $\mathbf{2.9}$ / $\mathbf{97.2\%}$ 
& $\mathbf{3.2}$ / $\mathbf{96.8\%}$ 
& $\mathbf{3.1}$ / $\mathbf{96.8\%}$ \\
& DNN + \texttt{[CLS]}
& $19.9$ / $65.8\%$
& $20.6$ / $64.4\%$
& $20.6$ / $64.4\%$ \\
\midrule
\multirow{2}{*}{$q_\theta(f_k \mid f_{<k}, z)$}
& Attention + spatial
& $\mathbf{1.0}$ / $\mathbf{92.8\%}$ 
& $\mathbf{1.2}$ / $\mathbf{91.2\%}$ 
& $\mathbf{1.2}$ / $\mathbf{91.6\%}$ \\
& DNN + \texttt{[CLS]}
& $1.1$ / $92.4\%$ & $1.3$ / $90.8\%$ & $1.3$ / $91.2\%$ \\
\bottomrule
\end{tabular}
}
\label{attnprobeacc}
\end{table}

The binding information metrics calculated from these probes are listed in Table \ref{binding_information_attn}. As can be seen, the attention probe on the full set of spatial tokens decodes $92.2\%$ of binding information, and for the binding information beyond what can be explained by features, there is $94.1\%$ decoded. Both significantly outperform the best DNN probe on the \texttt{[CLS]} token only.

\begin{table}[H]
\centering
\caption{\textbf{Probe-estimated binding information $\mathbf{B}_{\mathcal{O}}(Z)$, conditional binding information $\mathbf{B}^{*}_{\mathcal{O}, \mathcal{F}}(Z)$, proportion of dataset binding information resolved $\boldsymbol{\beta}_{\mathcal{O}}(Z)$, and their normalized forms $\boldsymbol{\beta}_{\mathcal{O}}(Z)$ and $\boldsymbol{\beta}^{*}_{\mathcal{O}, \mathcal{F}}(Z)$}, recorded on the test set. $H(O)=39.9$ bits and $H(F)=7.0$ bits on this test set. Attention probe's conditional results use the loss of the attention feature probe, while DNN's conditional results use the loss of the quadratic feature probe, as in Table \ref{tab:binding_information}.}
\resizebox{\columnwidth}{!}{%
\begin{tabular}{lcccc}
\toprule
Probe type 
& $\mathbf{B}_{\mathcal{O}}(Z)$ (bits) $\uparrow$
& $\boldsymbol{\beta}_{\mathcal{O}}(Z)$ $\uparrow$
& $\mathbf{B}^{*}_{\mathcal{O}, \mathcal{F}}(Z)$ (bits) $\uparrow$
& $\boldsymbol{\beta}^{*}_{\mathcal{O}, \mathcal{F}}(Z)$ $\uparrow$ \\
\midrule
Attention + spatial & $\mathbf{36.8}$ & $\mathbf{92.2\%}$ & $\mathbf{31.0}$ & $\mathbf{94.1\%}$  \\
DNN + \texttt{[CLS]} & $19.4$ & $48.5\%$ & $13.9$ & $42.4\%$  \\
\bottomrule
\end{tabular}
}
\label{binding_information_attn}
\end{table}

Here in this probe, the attention weights $a_{k, i}$ can be interpreted as selecting which spatial tokens are most useful for decoding $o_k$. Qualitatively, we find that this routing process is highly accurate in our probes: when the target object $o_k$ exists in the image (with label $o_k=1$), the highest attention score $a_{k, i}$ almost always maps to the target object patch.

\subsection{To what extent can binding be learned on different datasets?}
\label{data}

Does our binding framework capture the intrinsic difficulty of binding in the data distribution? We measure binding information on the following datasets with different binding challenges: 
\begin{itemize}
    \item \textbf{ColorShape} with the number of colors and shapes varying from 1 to 7 each, where the growing feature (and object) space can make it increasingly difficult for the model to bind well. We simplify the data distribution for easier comparison.
    \item \textbf{CLEVR} with varying degree of occlusion between objects \cite{johnson2016clevrdiagnosticdatasetcompositional}, where occlusion can lead to ambiguous boundaries between objects.
    \item \textbf{Visual Genome} with densely annotated natural features \cite{krishna2017visual} where feature and object learning may be intrinsically more difficult.
\end{itemize}

Appendix \ref{ap:datasets} contains the detailed setups for each dataset. We note here that we construct image representations by concatenating the \texttt{[CLS]} token with the mean-pooled spatial tokens from the final layer of the ViT.

\begin{figure*}[t]
  \centering
  \begin{minipage}[htb]{0.95\textwidth}
    \centering
    \includegraphics[width=\textwidth]{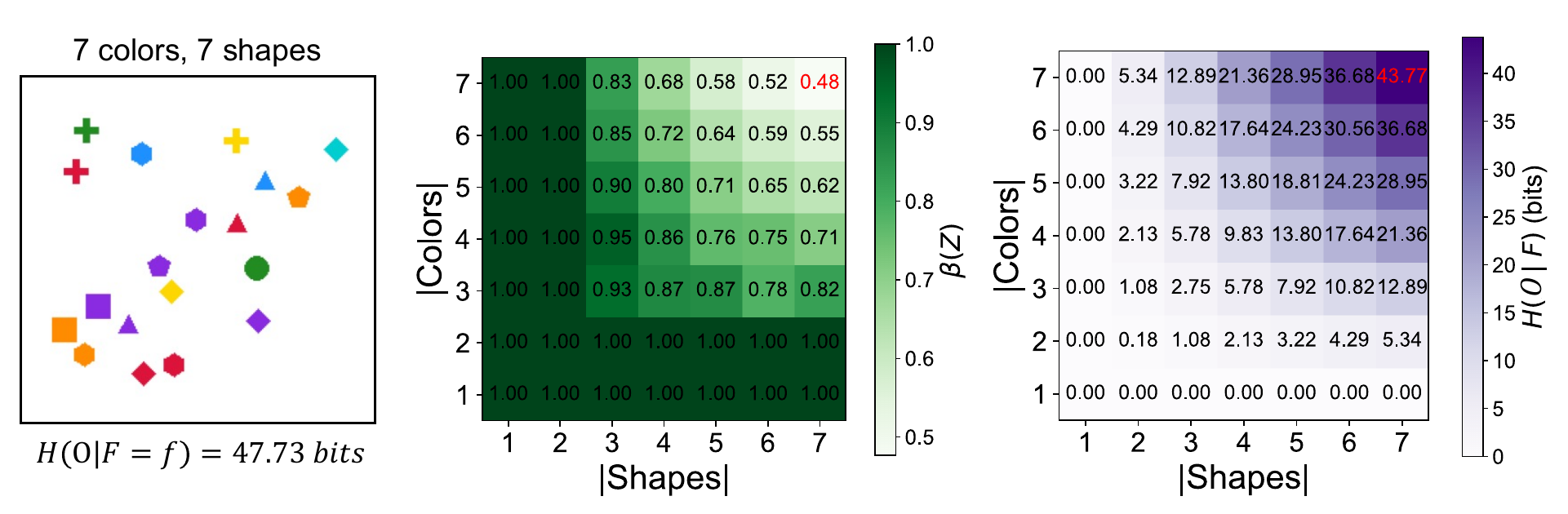}
  \end{minipage}%
  \caption{\textbf{Binding degrades with increasing feature space complexity in ColorShape.} \textbf{Middle:} We run the CLIP ViT-L/14 (224px) model on the Color-Shape dataset with all 49 configurations of numbers of colors and shapes, and report the normalized binding information $\boldsymbol{\beta}^{*}_{\mathcal{O}, \mathcal{F}}(Z)$. The percentage of binding information captured by the model decreases as the dataset becomes more complex, involving more objects with more colors and shapes. \textbf{Left:} With the full feature code $F$ set to 7 colors and 7 shapes (all present), the uncertainty of binding is captured by $H(O\mid F=f)$ which is 47.73 bits. \textbf{Right:} The binding uncertainty of the dataset distribution $H(O|F)$ for each configuration of numbers of colors and shapes.}
  \label{fig:colorshape}
\end{figure*}

\begin{figure*}[htb]
  \centering
  \begin{minipage}[htb]{0.95\textwidth}
    \centering
    \includegraphics[width=1\textwidth]{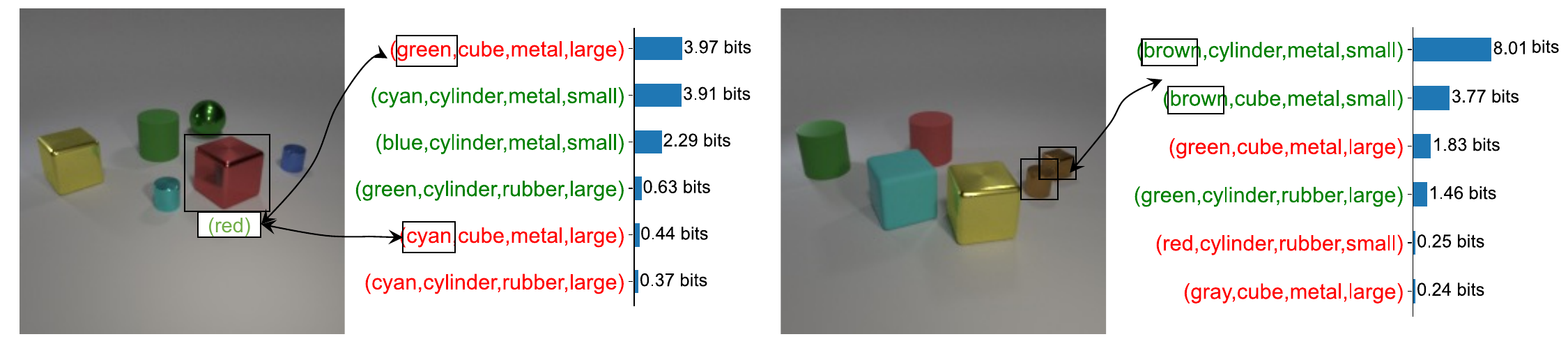}
  \end{minipage}%
  \caption{
\textbf{CLEVR examples of object-code uncertainty revealed by binding probes.}
Blue bars ($H(O \mid Z)$) indicate uncertainty in predicting objects. \textbf{Left:} Color misattribution in binding. \textbf{Right:} Uncertainty on the joint presence of a brown cylinder and a brown cube.
}

  \label{fig:clevr}
\end{figure*}

\paragraph{Influence of feature and object space complexity on binding.} We run the CLIP ViT-L/14 (224px) model on the ColorShape dataset while varying the number of colors and shapes from 1 to 7. Figure \ref{fig:colorshape} shows the results for all 49 configurations on this dataset. As shown in the figure (right), as the number of feature values (e.g., colors and shapes) increases, the space of possible bindings grows exponentially in the number of feature combinations, leading to a rapid increase in binding uncertainty (e.g., from $1$ to $2^{43}$ possibilities as the numbers of colors and shapes grow from 1 to 7). While the fraction of binding information captured by the model decreases with increasing complexity, it does not decay exponentially (Figure~\ref{fig:colorshape} Middle).

\paragraph{Influence of occlusion on binding.} 

We probe the representations of DINOv2-Large on datasets with different levels of occlusion (examples shown in Figure~\ref{fig:clevrocc}), 
where we vary the level of occlusion (by adjusting the camera elevation and angle of the scene) As shown in Table \ref{tab:occ}, occlusion has a notable influence on a model's ability to bind: binding information decodable monotonically decreases with higher levels of occlusion.

\begin{table}[H]
\centering
\caption{\textbf{Effect of occlusion on binding for DINOv2-Large}, using CLEVR dataset with 4 colors and 3 shapes. Camera elevation serves as a proxy for \textbf{occlusion level} (higher means less occlusion). $\Delta \beta(Z)$ is measured relative to the most occluded setting (0.6).}
\label{tab:occ}
\resizebox{\columnwidth}{!}{
\begin{tabular}{cccc}
\toprule
$ht.(\text{Camera})$ & $\mathbf{B}^{*}_{\mathcal{O}, \mathcal{F}}(Z)$ (bits)& $\boldsymbol{\beta}^{*}_{\mathcal{O}, \mathcal{F}}(Z)$  & $\Delta \boldsymbol{\beta}^{*}_{\mathcal{O}, \mathcal{F}}(Z)$ (pp)  \\
\hline
0.6 & 5.8 & 45.0\% &  0.0\\
1.2 & 6.6 & 51.1\% & +6.1 \\
1.8 & 6.9 & 53.3\% & +8.3 \\
2.5 & 7.1 & 55.4\% & +10.4 \\
3.2 & 7.6 & 58.7\% & +13.7 \\
\bottomrule
\end{tabular}}
\end{table}

\paragraph{Binding on natural datasets.} Table~\ref{tab:main} shows binding information on the natural Visual Genome dataset, along with previous synthetic datasets across three models. Although natural features and objects are more difficult to learn (often emerging in later layers of a neural network), the models achieve comparable levels of binding with synthetic datasets. Additionally, we note from the table that our binding measure also captures model capability: increasing CLIP's input resolution from 224px to 336px improves binding on all datasets, suggesting that finer spatial representations better support object-feature binding.

\begin{table}[H]
    \caption{\textbf{Conditional binding information $\boldsymbol{\beta}^{*}_{\mathcal{O}, \mathcal{F}}(Z)$ for different models and datasets.} The ColorShape dataset here has 7 colors and 7 shapes, with feature and object code distributions detailed in Appendix \ref{ap:datasets}. VG:Color and VG:TopAttr are two subsets of Visual Genome.}
    \label{tab:main}
    \centering
    \resizebox{\columnwidth}{!}{%
        \begin{tabular}{lcccccc}
        \toprule
        \textbf{Model} 
        & \textbf{ColorShape} 
        & \textbf{CLEVR}
        & \textbf{VG:COLOR}
        & \textbf{VG:TopAttr} \\
        \midrule
        \addlinespace
        DINOv2-Large          
            & $41.8\%$ & $61.0\%$ & $41.8\%$ & $\mathbf{47.0\%}$ \\
        CLIP ViT-L/14 (224px) 
            & $47.7\%$& $68.1\%$ & $45.8\%$ & $39.9\%$ \\
        CLIP ViT-L/14 (336px) 
            & $\mathbf{56.4\%}$ & $\mathbf{68.5\%}$ & $\mathbf{46.4\%}$ & $45.6\%$ \\
        \bottomrule
        \end{tabular}
    }
\end{table}

\section*{Discussion}

In our work, we defined binding with an information-theoretic framework and devised probing methods for measuring binding in model representations. We believe there are many benefits for taking a representational approach to defining binding: first, virtually all models are representation learners. Our definition can thus be widely applied for measurement and for comparison. Second, compared to directly measuring accuracy in binding tasks, the information approach (measuring entropy) takes into account the decision uncertainty of the model. Third, a representation measurement can enable us to understand the structure of the information by using differentially structured probes, as we have shown with the quadratic binding structure of the \texttt{[CLS]}. Finally, a representation-level diagnosis can guide approaches in model pre-training that specifically target better representations, such as the self-supervised pre-training of world models \cite{assran2025vjepa2selfsupervisedvideo}. 
 
 \textbf{Future work} can focus on improving binding performance of the latest models using the binding probe performances as objectives, and exploring potential architectural inductive biases to improve binding.

Our work does have \textbf{limitations}. Our framework relies on predefined discrete feature vocabularies; future work could extend it to continuous features with continuous probes. In addition, our probes measure decodable binding information rather than whether the model causally uses such information during downstream inference, so high probe performance may reflect latent information inaccessible to the model’s native readout mechanisms. Finally, the conditional definition ($B^*_{O,F}(Z)$) also assumes that feature code ($F$) can be reliably inferred from object code ($O$), which may not hold in noisy biological or human perception.

\section*{Acknowledgements}
The authors would like to thank the anonymous reviewers for their helpful feedback.

\section*{Impact Statement}
This paper presents work whose goal is to advance the field of machine learning and neuroscience. There are many potential societal consequences of our work, including those related to improved visual capabilities of artificial intelligence and better understanding of biological vision.

\bibliography{main}
\bibliographystyle{icml2026}

\newpage
\newpage
\appendix
\onecolumn

\section{Related Work}
\label{sec:relatedwork}
\paragraph{The Binding Problem}
With its roots in early artificial intelligence \cite{rosenblatt1962principles}, the binding problem has puzzled computer scientists and cognitive neuroscientists for over half a century \cite{greff2020bindingproblemartificialneural, von1999and, feldman2013neural, roskies1999binding, treisman1996binding}. Since early artificial vision models were in their infancy and lacked the capacity for complex scene decomposition, the initial momentum for understanding feature binding came largely from psychology and neuroscience \cite{treisman1999solutions, vonderMalsburg1981Correlation}. Although there is yet no consensus on the exact neural mechanisms of binding in the brain \cite{robertson2003binding, wolfe2012binding, di2012feature, scholte2025beyond, roelfsema2025feature}, significant progress has been made in identifying some of its core mechanistic components \cite{TREISMAN198097, reynolds1999role, roelfsema2023solving, singer1995visual}. These biological insights have inspired architectures designed to explicitly address binding, whether through temporal synchrony \cite{miyato2024artificial, gopalakrishnan2024recurrent} or attention-based mechanisms \cite{salehi2024modeling}.

\paragraph{Feature Binding in Deep Neural Networks} 
Following the unprecedented success of deep learning in computer vision \cite{krizhevsky2012imagenet, lecun2015deep}, the binding problem has gained renewed interest within the machine learning community. Earlier approaches attempted to address binding through explicit object-centric architectural and training biases \cite{locatello2020object, puebla2025visual, assouel2025object, aydemir2023self}, known as Object-Centric Learning (OCL) \cite{locatello2020object, greff2019multi}. Approaches such as Slot Attention \cite{locatello2020object} and MONet \cite{burgess2019monet} decompose inputs to permutation-invariant "slots" via iterative refinement, each representing an object. Although these models excel at grouping spatially coherent regions (object-part binding), they arguably fall short in feature binding, often struggling to correctly associate complex attributes like texture with their respective objects \cite{karazija2021clevrtex}.
Also, their general applicability is largely surpassed by the performance and scalability of self-supervised Vision Transformers (ViTs) \cite{simeoni2025dinov3, he2022masked, zhou2021ibot} and multi-modal models \cite{radford2021learningtransferablevisualmodels, girdhar2023imagebind,rubinstein2025we}. State-of-the-art ViTs now excel in both global recognition and dense prediction tasks (e.g., segmentation) \cite{simeoni2025dinov3} without requiring explicit architectural components or training process for binding. This has prompted a growing body of research investigating the degree to which binding capabilities naturally emerge in these models \cite{li2025doesobjectbindingnaturally}, identifying the underlying mechanisms that drive their partial success \cite{okawa2023compositional, seitzer2022bridging, wu2025transformers}, and proposing methods to remedy their limitations \cite{seo2025geometrical, hu2024token, koishigarina2025clipbehaveslikebagofwords}.

\paragraph{Probing for Binding}
A rigorous evaluation of feature binding requires both a suitably complex dataset and a precise metric for quantification. An ideal dataset must include scenes with multiple objects to challenge the model's binding capability, while providing ground-truth annotations that link objects to their specific attributes. Standard benchmarks such as MS-COCO \cite{lin2015microsoftcococommonobjects} and synthetic environments like CLEVR \cite{johnson2016clevrdiagnosticdatasetcompositional} have traditionally served this role, while recent works have introduced specialized datasets to target fine-grained compositional understanding~\cite{liu2024finecops}. In terms of measurement, existing approaches fall into two categories. The "symptomatic" approaches assess binding indirectly via performance on high-level downstream tasks, such as visual reasoning~\cite{campbell2025understandinglimitsvisionlanguage}, text-image retrieval errors \cite{yuksekgonul2023visionlanguagemodelsbehavelike} and compositional generalization \cite{lewis2024doesclipbindconcepts, pearson2025evaluatingcompositionalgeneralisationvlms}. Conversely, "mechanistic" approaches aim to evaluate binding directly within the internal representations, offering a task-agnostic measure of how information is organized and bound at the low level \cite{li2025doesobjectbindingnaturally}.

Recent studies have shown that vision-language models (e.g., CLIP) behave like bag-of-words and do not reliably bind the correct attributes to their corresponding entity \cite{koishigarina2025clipbehaveslikebagofwords, yuksekgonul2023visionlanguagemodelsbehavelike}, but these analyses rely primarily on linear probes. Since binding represents a combinatorial problem where linear separation may require exponentially high dimensionality, we argue that linear probes are insufficient. We therefore train non-linear probes to capture these complex relationships. 
\cite{li2025doesobjectbindingnaturally} propose quadratic probes to evaluate the object-binding property of \textit{IsSameObject}. However, this approach is not easily generalizable and implicitly assumes that each patch corresponds to a single, well-isolated object. Here, we provide a more formalized framework for binding that, in principle, extends to arbitrary models, representations, and feature definitions.

\section{Data Processing Inequality}

\begin{theorem}[Internal processing does not increase binding information]
\label{th:dpi}
A model typically has several layers, yielding internal representations $Z_1,\cdots, Z_n$. For a deterministic model where \[Z_1=g_0(X),\quad Z_{i+1}=g_i(Z_i),\quad i\in[1\dots n],\] there is \[\mathbf{B}_{\mathcal{O}}(X) \;\ge\; \mathbf{B}_{\mathcal{O}}(Z_1) \;\ge\; \cdots \;\ge\; \mathbf{B}_{\mathcal{O}}(Z_n),\] where we define \(\mathbf{B}_{\mathcal{O}}(X):=I(O;X\mid F).\) That is, any representation $Z_i$ cannot increase binding information in the input $X$, and neither does binding information increase through the internal processing of the model.

\begin{proof}

Since $Z_{i+1}\perp\!\!\!\perp Z_{<i}\mid Z_{i}$, we have a Markov Chain: $Z_1\to\cdots\to Z_n$. Since $X$ is generated from $p(X\mid O)$ (where $O$ specifies the feature values of objects), we can add $O$ to the Markov Chain: $O\to X \to Z_1\to\cdots\to Z_n$. For simplicity of notation, we denote $X:=Z_0$.

For any $0\leq i<j$, we first show that $I(Z_{j}; O\mid Z_{i}, F)=0$. \(I(Z_{j}; O\mid Z_{i}, F)=H(Z_j\mid Z_i,F)-H(Z_j\mid O, Z_i, F).\) Since \(0\leq H(Z_j\mid Z_i,F)\leq H(Z_j\mid Z_i)=0, 0\leq H(Z_j\mid O, Z_i,F)\leq H(Z_j\mid Z_i)=0\), we have \(H(Z_j\mid Z_i,F)=H(Z_j\mid O, Z_i,F)=0\). Hence $I(Z_{j}; O\mid Z_{i}, F)=0$.

By the chain rule of information, \[I(Z_i;O\mid F)=I(Z_i;O\mid F)+I(Z_j; O\mid F, Z_i)=I(Z_i, Z_j; O\mid F)=I(Z_j; O\mid F)+I(Z_i; O\mid F, Z_j)\geq I(Z_j; O\mid F).\]
\end{proof}  

\end{theorem}

\section{Proofs}
\label{pf:dpi}
\subsection{Theorem \ref{minusbinding}}
\begin{proof}
    We know that  
    \begin{gather}
     I(O;Z) = H(O) - H(O\mid Z) \notag \\
      I(F;Z) = H(F) - H(F\mid Z) \label{eq:f} \\
      \mathbf{B}_{\mathcal{O}}(Z) = H(O\mid F) - H(O\mid F,Z)   \label{eq:b}
    \end{gather}    
    Adding Eqs. \ref{eq:f} and \ref{eq:b}, we have $I(F;Z)+\mathbf{B}_{\mathcal{O}}(Z)=H(O, F)-H(O,F\mid Z)=H(F)-H(O\mid Z)=I(O;Z).$
    The penultimate equality is due to the fact that $F$ is a deterministic function of $O$.
\end{proof}
\subsection{Theorem \ref{bindcondeq}}
\begin{proof}
By definition,
\[
\mathbf{B}^{*}_{\mathcal{O},\mathcal{F}}(Z)=I(O;Z\mid F)
=H(O\mid F)-H(O\mid Z,F).
\]
Since \(F\) is a deterministic function of \(O\), \(H(F\mid O)=H(F\mid O,Z)=0\). Hence,
\[
H(O\mid F)=H(O,F)-H(F)=H(O)-H(F),
\]
and
\[
H(O\mid Z,F)=H(O,F\mid Z)-H(F\mid Z)
=H(O\mid Z)-H(F\mid Z).
\]
Substituting gives
\[
\mathbf{B}^{*}_{\mathcal{O},\mathcal{F}}(Z)
=H(O)-H(O\mid Z)-H(F)+H(F\mid Z).
\]
\end{proof}
\subsection{Theorem \ref{probeapprox}}
\begin{proof}
By definition,
\[
\mathcal L_{\mathrm{CE}}(\theta)
=
\mathbb{E}_{(o,z)\sim p(O,Z)}
\left[-\log q_\theta(o\mid z)\right].
\]
Adding and subtracting \(\log p(o\mid z)\), we get
\[
\mathcal L_{\mathrm{CE}}(\theta)
=
\mathbb{E}_{(o,z)}
\left[-\log p(o\mid z)\right]
+
\mathbb{E}_{(o,z)}
\left[
\log \frac{p(o\mid z)}{q_\theta(o\mid z)}
\right].
\]
The first term is \(H(O\mid Z)\), and the second term is
\[
\infdiv{p(o\mid z)}{q_\theta(o\mid z)}.
\]
Therefore,
\[
\mathcal L_{\mathrm{CE}}(\theta)
=
H(O\mid Z)
+
\infdiv{p(o\mid z)}{q_\theta(o\mid z)}.
\]
\end{proof}

\section{Experimental Setup}
We train probes on frozen visual representations. A summary of the default probe architectures and training hyperparameters is provided in Table~\ref{tab:training-configs}. All experiments are run on NVIDIA RTX 4090 GPUs.

In the dataset comparisons, we construct image representations by concatenating the \texttt{[CLS]} token with the mean-pooled spatial tokens from the final layer of the vision encoder.

\begin{table}[htbp]
\centering
\caption{\textbf{Default probe architectures and training hyperparameters.}}
\label{tab:training-configs}
\begin{tabular}{ll}
\toprule
\textbf{Trainer component} & \textbf{Setting} \\
\midrule
Batch size & 512 \\
Gradient Accumulation & 2 \\
Learning rate & $1\times10^{-3}$ \\
Weight decay & $1\times10^{-4}$ \\
Epochs & 40 epochs (converged for all probes) \\
LR scheduler & StepLR (step size = 8, $\gamma = 0.2$) \\
\bottomrule
\end{tabular}
\end{table}

\paragraph{Models.}We evaluate feature binding behavior using a diverse set of pretrained Vision Transformer backbones (Table~\ref{tab:models}). All models are used in a frozen setting.

\begin{table}[htbp]
\centering
\caption{\textbf{Vision Transformer backbones used for evaluation.}}
\label{tab:models}
\begin{tabular}{ll}
\toprule
Model family & HuggingFace identifier \\
\midrule
DINOv2-Large & \texttt{facebook/dinov2-large} \\
CLIP ViT-L/14 (224px) & \texttt{openai/clip-vit-large-patch14} \\
CLIP ViT-L/14 (336px) & \texttt{openai/clip-vit-large-patch14-336} \\
\bottomrule
\end{tabular}
\end{table}

\section{Ablation on $o_{<k}$ labels}
\label{ablation}

Table \ref{tab:object_history_ablation} shows that ablating on the conditionals leads to a notable decrease in binding information decoded, suggesting that it is necessary to condition on the $o_{<k}$ labels as required by the decomposition in Eq. \ref{autoreg}.

\begin{table}[H]
\centering
\caption{\textbf{Ablation of conditioning quadratic object code probes on previous object codes $o_{<k}$.} We report error reduction (ER) relative to the trivial majority-label baseline, which achieves $71.9\%$ accuracy for object codes.}
\resizebox{0.65\columnwidth}{!}{%
\begin{tabular}{lccc}
\toprule
Condition on $o_{<k}$ 
& Train loss $\downarrow$ / ER $\uparrow$ 
& Val loss $\downarrow$ / ER $\uparrow$ 
& Test loss $\downarrow$ / ER $\uparrow$ \\
\midrule
Yes 
& $\mathbf{21.4}$ / $\mathbf{64.4\%}$ 
& $\mathbf{22.0}$ / $\mathbf{63.3\%}$ 
& $\mathbf{22.0}$ / $\mathbf{63.0\%}$ \\
No 
& $26.6$ / $52.3\%$ 
& $27.3$ / $50.9\%$ 
& $27.3$ / $50.9\%$ \\
$\Delta$ (Yes -- No)
& $-5.2$ / $+12.1$ pp 
& $-5.3$ / $+12.5$ pp 
& $-5.3$ / $+12.1$ pp \\
\bottomrule
\end{tabular}
}
\label{tab:object_history_ablation}
\end{table}

\section{Estimating Dataset Priors}\label{app:hbf}

\paragraph{Computing $H(O)$ and $H(F)$ for ColorShape (8 colors and 8 shapes)}

To find $H(O)$ and $H(F)$ for the ColorShape dataset, we only need to find the size of their support in each split and then take their logarithm, since they are both evenly distributed. Prior to splitting, there are $\binom{8}{6}\times \binom{8}{6}=784$ possibilities of choosing $F$. The training/validation/test sets each takes 522/131/131 disjoint values of $F$. Hence $H_{\text{train}}(F)=\log_2(522)=9.0 \text{ bits}, H_{\text{val}}(F)=H_{\text{test}}(F)=\log_2(131)=7.0  \text{ bits}.$

Next we find the support size of $O$ for each split. For each \textit{chosen} set of 6 colors and 6 shapes, we now find the number of ways to choose 18 objects out of the 36 possibilities while ensuring coverage of all features. We use the inclusive-exclusion formula by first choosing 18 objects out of 36, $\binom{36}{18}$, and then subtracting that by scenarios where we miss at least one feature $-\binom{6}{1}\binom{5\times 6}{18}-\binom{6}{1}\binom{6\times 5}{18}$ and adding back scenarios where we miss at least two features $\binom{6}{2}\binom{4\times 6}{18}+\binom{6}{1}\binom{6}{1}\binom{6\times 4}{18}+\binom{6}{2}\binom{5\times 5}{18}$, and so on. More compactly, the support size of $O$, given a \textit{chosen} set of 6 colors and 6 shapes, is as follows:

\[\sum_{i=0}^{6} \sum_{j=0}^{6} (-1)^{i+j} \binom{6}{i} \binom{6}{j} \binom{(6-i)(6-j)}{18}=8,058,525,440.
\]

Now, $O$ is different for different chosen sets of colors and shapes, so the total support size is the above multiplied by the support size of $F$ in each split. Hence $H_{\text{train}}(O)=\log_2(8,058,525,440\times 522)=41.9 \text{ bits}$, and similarly $H_{\text{val}}(O)=H_{\text{test}}(O)=\log_2(8,058,525,440\times 131)=39.9 \text{ bits}$.

\paragraph{Computing $H(O\mid F)$ for ColorShape (1 -- 7 colors/shapes) and CLEVR}
In ColorShape and CLEVR, we generate synthetic data with full control, enforcing a uniform distribution over all valid binding configurations, with the number of objects capped at $k_{\max}$. Under this assumption,
\[
H(O \mid F=f)
=
\log_2 \!\left( \sum_{k \in \mathcal{K}(f)} N_k(a_1,\dots,a_G) \right),
\]
where $f$ denotes a fixed feature realization, $\mathcal{K}(f)$ is the set of admissible object counts consistent with $f$, and $N_k(a_1,\dots,a_G)$ counts the number of valid binding configurations with exactly $k$ objects.

Because the distribution of $F$ is also known and tractable in these datasets, we can compute the true conditional binding entropy $H(O\mid F)$ exactly by averaging $H(O\mid F=f)$ over feature realizations.

\textbf{Computing $H(O\mid F)$ for Visual Genome.} We again approximate $H(O \mid F)$ by enumerating binding configurations consistent with $F=f$. Here, however, we cannot control for $O$ being uniform, so we need to use the empirical dataset distribution of $O$ as an approximation. We assume that each object instance occurs independently: this means the probability of a binding $O=o$ is the product of probabilities of each object instance in that binding assignment. The probability of each object instance (e.g. red car, blue balloon) is easily computed over the dataset. This then gives $H(O\mid F=f)$ as well as $p(f)$, and after averaging over $f\sim p(f)$, yields $H(O\mid F)$.

\section{Details of datasets in Section \ref{data}}
\label{ap:datasets}
\paragraph{ColorShape (1 -- 7 colors/shapes)}
We consider a ColorShape dataset with 7 colors and 7 shapes. The maximum number of objects is capped at $\lfloor N_1 N_2 / 2 \rfloor$. We evaluate the model across all possible numbers of color–shape pairs. The number of training samples is scaled proportionally to $N_1 \cdot N_2$. The dataset consists of 48,000 samples.

Different from the ColorShape with 8 colors and 8 shapes, and to simplify calculation, we generate images $X \sim P(X \mid O)$ with $O$ \emph{uniform} over its allowed values in its space $\Omega_O$, i.e., every multi-object scene, determined by the object types in the scene, is equally likely. 

\paragraph{CLEVR} To get progressively closer to real-world scenarios that include more feature types, we evaluate binding on CLEVR \cite{johnson2016clevrdiagnosticdatasetcompositional}, for which we generate images with 3-D objects out of 8 colors, 3 shapes, 2 materials, and 2 sizes, with a total of 96 possible feature combinations. We illustrate the different occlusion levels of the CLEVR dataset in Figure \ref{fig:clevrocc}, which are controlled by camera elevation and pitch angle. The dataset consists of 24,000 samples.

\begin{figure}[H]
    \centering
    \includegraphics[width=0.75\linewidth]{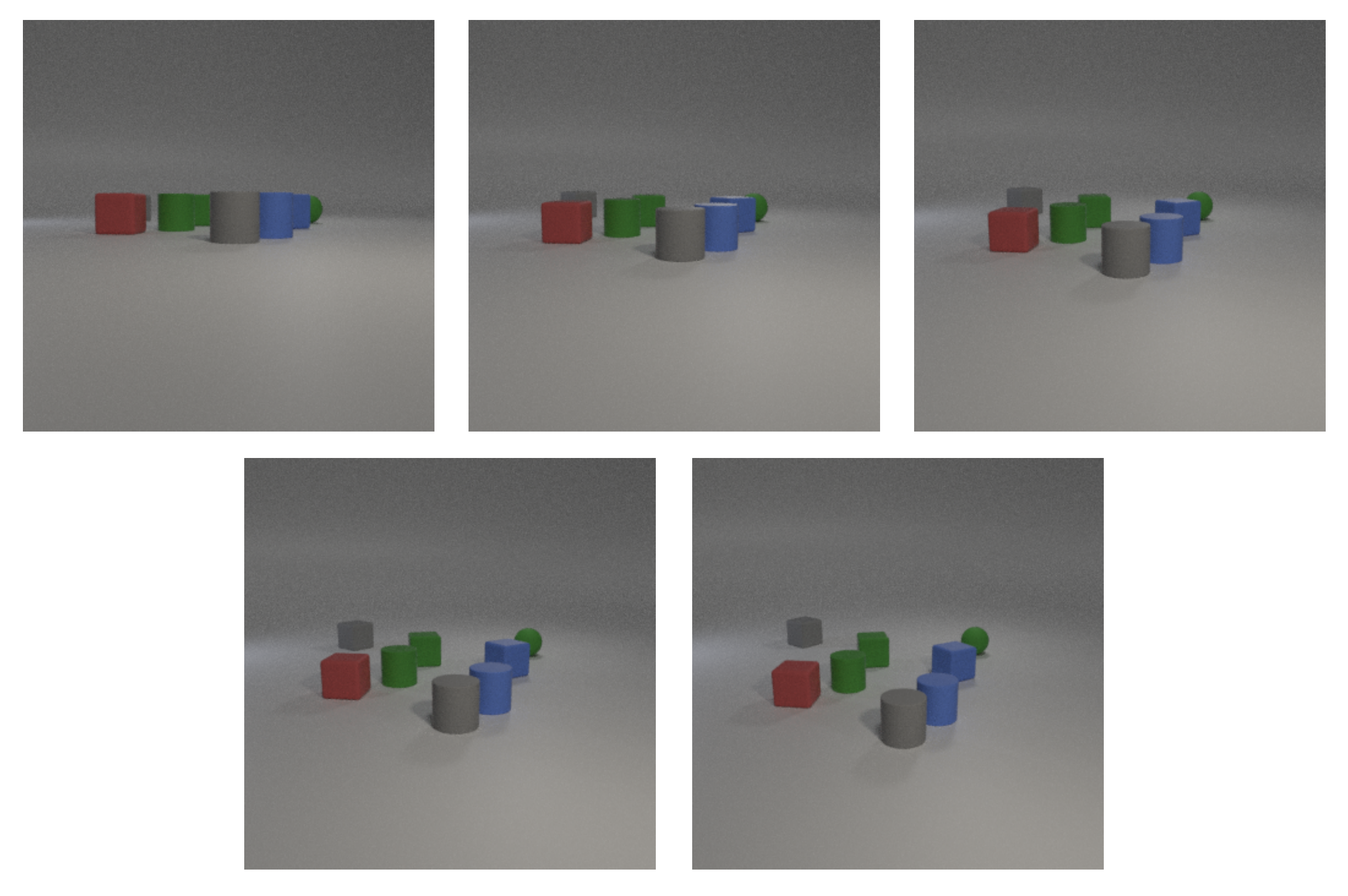}
    \caption{\textbf{The different occlusion levels of the CLEVR dataset, controlled by camera elevation and pitch angle.} In left-to-right, top-to-bottom order, the camera elevation and pitch angles are $(0.6, 6), (1.2, 12), (1.8, 18), (2.5, 25), (3.2, 32)$. As can be seen, in the most occluded scenario, the gray cube is almost entirely covered by the red cube, and the positioning of the green cylinder and green cube (as well as the blue cylinder and blue cube) can potentially create difficulty for binding due to their ambiguous boundaries. These issues are progressively resolved when there is less occlusion.}
    \label{fig:clevrocc}
\end{figure}

\paragraph{Visual Genome}
Visual Genome is a large-scale vision dataset with dense annotations of objects, attributes, and relationships. From it, we construct two subsets: VG:Color and VG:TopAttr. In VG:Color, we restrict attributes to color terms; in VG:TopAttr, we retain the most frequent (top) attributes. In both cases, we keep only images containing the selected object classes and attributes, resulting in approximately 50,000 samples. We select the 20 most frequent object classes and pair them with either 20 colors or 20 top attributes. Object classes and attributes are collapsed using WordNet synsets. Attributes that are not annotated for an object are treated as an explicit feature value, in addition to the annotated attributes.

\begin{figure*}[!htb]
  \centering
  \begin{minipage}[htb]{1\textwidth}
    \centering
    \includegraphics[width=1\textwidth]{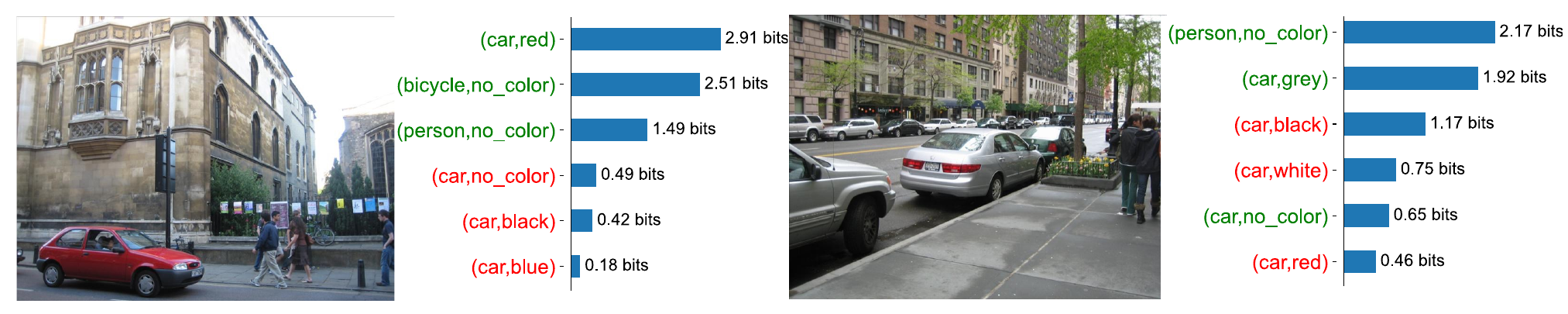}
  \end{minipage}%
  \caption{
\textbf{VG:Color examples of object-code uncertainty revealed by binding probes.} 
}

  \label{fig:vgcolor}
\end{figure*}

\end{document}